
\documentclass[11pt]{article}

\usepackage{hyperref}
\usepackage{geometry}        
\usepackage{times}           
\usepackage{float}           
\usepackage{subcaption}
\usepackage{color}

\usepackage{natbib}
\setcitestyle{round}

\usepackage[utf8]{inputenc} 
\usepackage[T1]{fontenc}    
\usepackage{url}            
\usepackage{booktabs}       
\usepackage{amsfonts}       
\usepackage{nicefrac}       
\usepackage{microtype}      
\usepackage{xcolor}         
\usepackage{graphicx}
\usepackage{amsmath}
\usepackage{amsthm}
\usepackage{amssymb}
\usepackage{enumitem}
\usepackage{algorithm}
\usepackage{algorithmic}

\usepackage{pgfplots}
\usepackage{pgfplotstable}
\usepgfplotslibrary{groupplots}
\usepackage{caption}
\pgfplotsset{compat=1.18}
\usepgfplotslibrary{fillbetween}
\usepackage{tikz}
\usepackage{multirow}

\newtheorem{definition}{Definition}
\newtheorem{remark}{Remark}
\newtheorem{proposition}{Proposition}
\newtheorem{assumption}{Assumption}
\newtheorem{lemma}{Lemma}

\definecolor{myredfig1}{RGB}{220,20,60}
\definecolor{myred}{RGB}{255,31,91}
\definecolor{mygreen}{RGB}{0,205,108}
\definecolor{myblue}{RGB}{0,154,222}
\definecolor{mypurple}{RGB}{175,88,186}
\definecolor{myyellow}{RGB}{255,198,30}
\definecolor{myorange}{RGB}{242,133,34}
\definecolor{mygray}{RGB}{160,177,186}
\definecolor{mybrown}{RGB}{166,118,29}

\newcommand{\smalleq}[2][0.95]{\scalebox{#1}{$#2$}}

\geometry{margin=1in}

\title{\textbf{A Temporal Difference Method} \\ \textbf{for Stochastic Continuous Dynamics}}
\author{
  \textbf{Haruki Settai} \\
  University of Tokyo \\
  \texttt{sharuk@g.ecc.u-tokyo.ac.jp}
  \and
  \textbf{Naoya Takeishi} \\
  \normalsize University of Tokyo \\
  \normalsize \texttt{ntake@g.ecc.u-tokyo.ac.jp}
  \and
  \textbf{Takehisa Yairi} \\
  \normalsize University of Tokyo \\
  \normalsize \texttt{yairi@g.ecc.u-tokyo.ac.jp}
}
\date{}

\begin{document}

\maketitle

\begin{abstract}
  For continuous systems modeled by dynamical equations such as ODEs and SDEs, Bellman's Principle of Optimality takes the form of the Hamilton-Jacobi-Bellman (HJB) equation, which provides the theoretical target of reinforcement learning (RL). Although recent advances in RL successfully leverage this formulation, the existing methods typically assume the underlying dynamics are known a priori because they need explicit access to the coefficient functions of dynamical equations to update the value function following the HJB equation. We address this inherent limitation of HJB-based RL; we propose a model-free approach still targeting the HJB equation and propose the corresponding temporal difference method. We establish exponential convergence of the idealized continuous-time dynamics and empirically demonstrate its potential advantages over transition–kernel–based formulations. The proposed formulation paves the way toward bridging stochastic control and model-free reinforcement learning.
\end{abstract}

\section{Introduction}
Reinforcement learning (RL) has been successfully applied in various domains, ranging from discrete systems like board games \citep{shs:18} to systems that are continuous in both state and time such as robotic control \citep{kbp:13}. RL research has been advancing through a variety of approaches, and much of the work has focused on improving methods by refining objective functions \citep{slm:15,swd:17}, balancing exploration and exploitation \citep{hza:18}, and developing more effective architectures \citep{hlb:19}. These efforts have significantly advanced the field, leading to the development of various successful algorithms.

In contrast to the studies primarily targeting algorithmic components or optimization techniques, we focus on the continuity of time and explore what we call continuous RL, where the dynamics of systems are described by ordinary differential equations (ODEs) or stochastic differential equations (SDEs), which is a relatively underexplored aspect of RL. Although many RL methods have been applied, sometimes heuristically, to both discrete and continuous systems, the continuity of the system has not necessarily been fully exploited, even when it is known in advance. Laying a foundation for utilizing prior knowledge of continuity is important toward more effective learning and decision-making methods, particularly for practical applications such as robot control and autonomous driving, which typically fall into the target of continuous RL.

A way to incorporate prior knowledge of the system's continuity into the learning process is to use the Hamilton-Jacobi-Bellman (HJB) equation. In the Bellman equation, continuity is encoded in the transition kernel. However, in model-free RL, where transitions are approximated using samples, this continuity information is lost because the transition kernel is not explicitly modeled. On the other hand, the HJB equation retains the continuity information in the argument of the expectation rather than in the transition kernel. This property allows the HJB equation to keep taking continuity into account even under sample-based approximations. However, because the HJB equation depends on the coefficient functions of the system's dynamics, prior work has largely been limited to model-based approaches \citep{mb:97, yhl:21}.

In this paper, we introduce a temporal difference (TD) method based on the HJB equation, namely \emph{differential TD} (dTD), achieved through sample-based approximation of the expectation term in the HJB equation. dTD enables policy evaluation without requiring knowledge or estimation of the system dynamics, while incorporating the continuity of the dynamics into the learning process. It is compatible with on-policy methods such as A2C \citep{mbm:16} and PPO \citep{swd:17}, and we demonstrate its effectiveness on Mujoco \citep{tet:12} tasks including Hopper, HalfCheetah, Ant, and Humanoid. The codes for the proposed method are available at \url{https://github.com/4thhia/differential_TD}.

\section{Related Work}
\paragraph{Deterministic Dynamics}
The study of continuous RL for ODE systems can be traced back to studies such as \citet{b:94, m:97, d:00, m:06}. \citet{b:94} discovered that the Q-function collapses in continuous RL, which was rigorously proven and extended to deep RL in \citet{tbo:19}. In \citet{m:97}, model-free approaches for ODE systems were first studied. \citet{d:00} was the first to introduce TD for ODE systems and extended it to TD($\lambda$) and actor-critic. \citet{m:06} investigated the estimation of policy gradients for ODE systems and proposed a pathwise derivative approach. More recently, \citet{vc:17} developed a model-free RL framework for deterministic linear systems. \citet{ksy:21} proposed a model-free Q-learning approach in which the control is derived from the HJB equation, while the learning target is based on the conventional Bellman equation. \citet{yhl:21} introduced a model-based method that leverages the Neural ODE framework to enable continuous-time optimization using learned system dynamics.

\paragraph{Stochastic Dynamics}
One of the earliest works on RL in SDE systems is \citet{mb:97}, which takes a model-based approach. However, research in this direction remained largely unexplored until recently. In the past few years, a growing body of work has emerged that aims to establish theoretical foundations for RL in stochastic dynamics.
\citet{wzz:20} introduced an entropy-regularized relaxed control formulation and provided a comprehensive analysis in the LQR setting. \citet{tzz:22} further demonstrated the well-posedness of the HJB equation within this relaxed control framework. \citet{jz:22} showed that Bellman optimality is equivalent to maintaining the martingale property of a suitably defined stochastic process, and proposed a corresponding algorithm. However, their method requires access to full trajectory information and thus applies only to finite-horizon settings. Building on this approach, \citet{jz2:22, jz:23} proposed actor-critic and Q-learning algorithms for finite-horizon SDE systems, respectively. \citet{zty:23} extended key theoretical tools such as the state visitation distribution and the performance difference lemma to the continuous-time setting and applied them to TRPO and PPO. Independently, \citet{kb:23} proposed a variance-reduction technique for policy evaluation.

\section{Background}

\subsection{Problem Setting}
We consider a continuous RL setting where the state space is $\mathcal S\subset \mathbb R^n$ and the action space is $\mathcal A$. We suppose that the dynamics of the state are governed by the following controlled SDE:
\begin{equation}
dS_t = \mu(S_t, A_t)dt + \sigma(S_t, A_t)dB_t, \label{eq:sde}
\end{equation}
where $\mu: \mathcal{S} \times \mathcal{A} \rightarrow \mathbb R^{n}$, $\sigma:\mathcal{S} \times \mathcal{A}\rightarrow \mathbb R^{n\times m}$, and $(B_t)_{t\geq 0}$ is the $m$-dimensional Brownian motion. Note that the state evolution is influenced by both the inherent noise in the system as well as the randomness induced by the stochastic policy $\pi: \mathcal S\rightarrow \mathcal P(\mathcal A)$, where $\mathcal P(\mathcal A)$ is the space of probability distribution over the action space. Thus, the expectation related to this SDE is expressed as $\mathbb E_{p_{\pi}}[\cdot]$, where $p_{\pi}$ denotes the transition kernel corresponding to this SDE. For simplicity, we assume that the stochastic process \eqref{eq:sde} is well-defined; see Appendix~\ref{app:justification} for a detailed justification. We here focus on SDE systems because we can recover results for ODE systems in the limit of $\sigma=0$.

We primarily focus on the model-free setting, where the agent has no prior knowledge of the dynamics coefficients $\mu$ and $\sigma$ in Eq.~\eqref{eq:sde}. Throughout this paper, the term RL inherently refers to this model-free setting unless specifically stated otherwise (e.g., as model-based RL).

\subsection{Continuous RL}
\label{sec:continuous_RL}
The dynamics governed by the SDE in Eq.~\eqref{eq:sde} exhibit the Markov property under a Markov policy. (Informally, the infinitesimal evolution depends only on the current state–action pair, as the coefficients of the SDE depend on $(S_t,A_t)$ at time $t$.) This confirms that continuous RL falls within the general framework of an MDP. For example, The Bellman optimality equation can be established as follows:
\begin{equation*}
V^{*}(s_t) = \underset{\pi}{\max}\ \mathbb E_{p_{\pi}}\left[\int_{t}^{t'}\text{e}^{-\gamma (\tau-t)}\rho(s_\tau, A_\tau)d\tau + \text{e}^{-\gamma (t'-t)} V^{*}(S_{t'})\right],
\end{equation*}
where $\rho: \mathcal{S} \times \mathcal{A}\rightarrow \mathbb R$ is the reward rate function, and $\gamma\in (0, \infty)$ is the constant discount factor. Here, $V^{*}(s)$ is the optimal value function, defined as:
\begin{equation*}
V^{*}(s) := \underset{\pi}{\max}\ \mathbb{E}_{p_{\pi}}\left[\int_{t}^{\infty}\text{e}^{-\gamma(\tau-t)}\rho(S_\tau, A_\tau) d\tau\ \middle|\ S_{t} = s\right].
\end{equation*}

When learning based on discrete observations of a continuous system, such as in simulations, one can discretize the problem as follows:
\begin{equation}
V^{*}(s_t) = \underset{\pi}{\max}\ \mathbb E_{p_{\pi}}\left[\rho(s_t, A_t)\Delta t + \text{e}^{-\gamma \Delta t} V^{*}(S_{t+\Delta t})\right], \label{eq:cbe}
\end{equation}
where $\Delta t$ is a small time interval and need not be constant.

This example shows that standard discrete RL methods can be straightforwardly applied to continuous RL (though the use of Q-functions in continuous time requires careful consideration \citep{tbo:19}). However, this approach may not be sufficiently efficient when employing model-free methods that approximate the expectation using samples. This inefficiency arises because the system's underlying SDE dynamics are only encoded in the transition kernel (i.e., the subscript of the expectation). Dropping this kernel via sample-based approximation thus disregards the prior knowledge of continuity and fails to leverage this structural information. This implies that the agent may not even be aware of whether it is operating in a discrete or a continuous system.

Since we are restricting the problem class from general MDPs to continuous RL, it is desirable to develop more efficient algorithms. Specifically, we desire algorithms where the agent can at least perceive and leverage the system's continuity information.

\subsection{Natural Target of Continuous RL}
We focus on TD learning, a fundamental approach in RL. TD learning naturally target two fundamental types of Bellman equations: the optimality equation and the expectation equation. This classification, combined with the V- or Q-function, leads to four primary candidates for the target.

Among these, the V-Bellman optimality equation is immediately excluded as it is not compatible with model-free framework. Furthermore, we will show in Section~\ref{sec:main} that the Q-Bellman optimality equation, while model-free in discrete RL, cannot be effectively learned by TD in the context of continuous RL due to the difficulty of the maximization operator. Thus, in continuous RL, the natural scope is policy evaluation.

The choice is now between the V- and Q-Bellman expectation equations. Although the Q-Bellman expectation equation is compatible with our proposed methodology, Q-functions often degenerate in continuous-time systems~\citep{tbo:19}, introducing unnecessary complexity. Therefore, to maintain clarity regarding the core principles, we focus on policy evaluation using the V-Bellman expectation equation in this paper. This V-Bellman expectation equation is expressed as follows in continuous RL.

\begin{equation}
V^\pi(s_t) = \mathbb E_{p_{\pi}}\left[\rho(s_t, A_t)\Delta t + \text{e}^{-\gamma \Delta t} V^{\pi}(S_{t+\Delta t})\right], \label{eq:cbe2}
\end{equation}

\subsubsection{Contrast with Stochastic Control}
Much of the prior work on continuou RL originates from the field of stochastic control. These approaches often start by directly introducing the HJB equation, which is the continuous counterpart of the Bellman optimality equation~\eqref{eq:cbe}. For instance, the HJB equation can be expressed as:
(see Appendix~\ref{app:hjb2} for more detail):
\begin{equation}
\smalleq{\begin{aligned}
V^{*}(s_t) = \frac{1}{\gamma} \max_{\pi}\ 
\mathbb{E}_{\pi} \Bigg[
    \rho(s_t, A_t)
    &+ \sum_{i=1}^{n} \mu^i(s_t, A_t) 
    \frac{\partial V^{*}(s)}{\partial s^i} \bigg|_{s=s_t} \\
    &+ \frac{1}{2} \sum_{i=1}^{n} \sum_{j=1}^{n}
    [\sigma(s_t, A_t)\sigma^\top(s_t, A_t)]^{ij}
    \frac{\partial^2 V^{*}(s)}{\partial s^i \partial s^j} 
    \bigg|_{s=s_t} 
\Bigg],
\end{aligned}}
\label{eq:hjb}
\end{equation}
where $\mu^i$ and $[\sigma\sigma^\top]^{ij}$ denote the $i$-th element of $\mu$ and the $(i, j)$-th element of $\sigma\sigma^\top$, respectively.

\noindent However, in the context of continuous RL, the introduction of HJB equation immediately restricts the methodology to model-based methods, even if we replace the $V$ function with a $Q$ function, as discussed in Section~\ref{sec:main}. Thus, adopting the HJB equation offers no benefit over the standard Bellman equation~\eqref{eq:cbe} for TD learning.

Independently of this stochastic-control line of work, our motivation—to design algorithms that allow the agent to leverage the system's underlying continuity in a model-free manner—naturally leads us to introduce a variant of the HJB equation. And this variant, as noted earlier, is built upon the V-Bellman expectation equation~\eqref{eq:cbe2}.

\section{TD Method for Stochastic Continuous Dynamics}
\label{sec:main}

A natural way to inform the agent that the system follows an SDE is to embed the model directly into the update rule. As discussed in Section~\ref{sec:continuous_RL}, with the standard Bellman expectation equation~\eqref{eq:cbe2} this information is lost when we pass to a sample-based approximation, which drops the expectation subscript $p_\pi$ where the model is embedded. Therefore, by embedding the SDE information in the argument rather than in the subscript of the expectation, one can prevent this information from being lost even under sample approximation. This can be achieved by further transforming the Bellman expectation equation using the SDE, i.e., by expanding $V(S_{t+\Delta t})$ via Itô's formula, resulting in a variant of the HJB equation (the derivation is identical to that of the HJB equation; see Appendix~\ref{app:hjb2} for more details):
\begin{equation}
\smalleq{\begin{aligned}
V^{\pi}(s_t) = \frac{1}{\gamma} \mathbb{E}_{\pi} \Bigg[
    \rho(s_t, A_t)
    &+ \sum_{i=1}^{n} \mu^i(s_t, A_t)
    \frac{\partial V^{\pi}(s)}{\partial s^i} \bigg|_{s_t}
    \!\!
    + \frac{1}{2} \sum_{i=1}^{n} \sum_{j=1}^{n}
    [\sigma(s_t, A_t)\sigma^\top(s_t, A_t)]^{ij}
    \frac{\partial^2 V^{\pi}(s)}{\partial s^i \partial s^j}
    \bigg|_{s_t}
\Bigg].
\end{aligned}}
\label{eq:ehjb}
\end{equation}
We call this equation the HJB for a fixed policy.

Now, are we all ready to implement model-free value iteration just by approximating the expectation on the right-hand side of \eqref{eq:ehjb} with samples? The answer is no because the argument of the expectation includes the coefficient functions of the SDE, $\mu$ and $\sigma$, making it impossible to directly approximate the expectation with samples.

\subsection{Deriving TD from the HJB equation}
We now present our main theoretical result. It gives the foundation for our model-free formulation of temporal-difference learning based on the HJB equation. The idea is that the drift and diffusion terms in the HJB equation can be equivalently expressed using limits of sample-based finite differences.

\begin{proposition}
When a stochastic process $(S_t)_{t \geq 0}$ follows the SDE in \eqref{eq:sde}, we have
\begin{equation}
\mathbb E_{\pi}\left[\mu^i(s_t, A_t)\right] = \underset{\Delta t\rightarrow 0}{\lim}\mathbb E_{p_{\pi}}\left[\frac{S_{t+\Delta t}^i-s_t^i}{\Delta t}\right] \label{eq:claim1}
\end{equation}
and
\begin{equation}
\mathbb E_{\pi}\left[[\sigma(s_t, A_t)\sigma^\top(s_t, A_t)]^{ij}\right] = \underset{\Delta t\rightarrow 0}{\lim}\mathbb E_{p_{\pi}}\left[\frac{(S_{t+\Delta t}^i - s_{t}^i)(S_{t+\Delta t}^j - s_{t}^j)}{\Delta t}\right]. \label{eq:claim2}
\end{equation}
\label{thm:coef}
\end{proposition}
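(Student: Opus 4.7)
My plan is to derive both identities from the Itô integral form of the SDE. Writing
\begin{equation*}
S_{t+\Delta t}^i - s_t^i \;=\; D_{\Delta t}^i + M_{\Delta t}^i,
\end{equation*}
where $D_{\Delta t}^i = \int_t^{t+\Delta t} \mu^i(S_\tau,A_\tau)\,d\tau$ is the drift contribution and $M_{\Delta t}^i = \sum_k\int_t^{t+\Delta t} \sigma^{ik}(S_\tau,A_\tau)\,dB_\tau^k$ is an Itô stochastic integral satisfying $\mathbb{E}_{p_{\pi}}[M_{\Delta t}^i \mid S_t=s_t]=0$, I would reduce the two claims to standard first-moment and quadratic-covariation identities followed by a continuity-based limit passage.

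For (\ref{eq:claim1}), the martingale property of $M_{\Delta t}^i$ immediately removes the stochastic term under $\mathbb{E}_{p_{\pi}}$, and Fubini turns the right-hand side into $\tfrac{1}{\Delta t}\int_t^{t+\Delta t} \mathbb{E}_{p_{\pi}}[\mu^i(S_\tau,A_\tau)]\,d\tau$. Continuity of $\mu^i$ together with the right-continuity of $\tau\mapsto(S_\tau,A_\tau)$ at $\tau=t$, which follows from the well-posedness hypotheses of Appendix~\ref{app:wellposedness}, implies the integrand is continuous at $\tau=t$; the time average then converges to its value at $\tau=t$, giving (\ref{eq:claim1}).

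For (\ref{eq:claim2}), I would expand $(D^i+M^i)(D^j+M^j)$ into four cross-terms. Since $\mu$ is bounded, $|D^i_{\Delta t}|=O(\Delta t)$ a.s., and by the Itô isometry $\mathbb{E}(M^i_{\Delta t})^2 = O(\Delta t)$; hence the pure-drift term contributes $O(\Delta t^2)$ and each drift-diffusion cross term $O(\Delta t^{3/2})$ by Cauchy-Schwarz, both negligible after dividing by $\Delta t$. The only surviving contribution is $\mathbb{E}[M^i_{\Delta t}M^j_{\Delta t}]$, which the Itô covariation formula (vectorized Itô isometry) identifies with $\mathbb{E}\!\left[\int_t^{t+\Delta t}[\sigma\sigma^\top]^{ij}(S_\tau,A_\tau)\,d\tau\right]$. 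Dividing by $\Delta t$ and applying the same continuity argument as above yields (\ref{eq:claim2}).

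The main obstacle, in my view, is not the Itô calculus itself but the limit-inside-expectation bookkeeping: one must legitimately pass $\Delta t\to 0$ through the time-averaged expectations, which requires the maps $\tau\mapsto \mathbb{E}_{p_{\pi}}[\mu^i(S_\tau,A_\tau)]$ and $\tau\mapsto \mathbb{E}_{p_{\pi}}[[\sigma\sigma^\top]^{ij}(S_\tau,A_\tau)]$ to be right-continuous at $\tau=t$. This is where a dominated-convergence argument enters, and where the standing Lipschitz, linear-growth, and policy-regularity assumptions really do work; the rest of the argument is essentially routine.
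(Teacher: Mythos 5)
Your proof is correct under the paper's standing regularity assumptions, but it takes a genuinely different route from the paper's. The paper works with a local It\^o--Taylor (Euler--Maruyama-type) expansion in which the coefficients are frozen at $(s_t, A_t)$, namely $S^i_{t+\Delta t} = s^i_t + \mu^i(s_t,A_t)\Delta t + \sum_j \sigma^{ij}(s_t,A_t)(B^j_{t+\Delta t}-B^j_t) + O(\Delta t^{3/2})$; it then takes expectations term by term, kills the Brownian increment by independence and zero mean, uses $\mathbb{E}[(B^k_{t+\Delta t}-B^k_t)(B^l_{t+\Delta t}-B^l_t)]=\delta_{kl}\Delta t$ for the second claim, and discards the remainder in the limit. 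You instead keep the exact It\^o integral representation, use the martingale property of the stochastic integral for the first moment, the It\^o isometry/covariation identity for the second, and a Lebesgue-differentiation-type continuity argument to evaluate $\tfrac{1}{\Delta t}\int_t^{t+\Delta t}\mathbb{E}[\,\cdot\,]\,d\tau$ at $\tau=t$. Your version is more careful on exactly the point the paper elides: the paper never justifies the moment bounds on its $O(\Delta t^{3/2})$ remainder or the interchange of limit and expectation, whereas you isolate that step explicitly and identify the continuity/domination hypotheses it requires (these are supplied by Assumption~\ref{assump:wpz} together with the bounded-domain assumption, which gives the boundedness of $\mu$ and $\sigma$ you invoke). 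What the paper's version buys in exchange is brevity and a closer match to the algorithmic semantics, since freezing the coefficients at $(s_t,a_t)$ over the sampling interval mirrors how the action is actually held between observations; one small caveat for your argument is that the right-continuity of $\tau\mapsto A_\tau$ is delicate for a continuously resampled stochastic policy, so it is cleaner to pass to the action-averaged coefficients $\widetilde\mu,\widetilde\sigma$ of Appendix~\ref{app:wellposedness} before invoking continuity in $\tau$.
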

\begin{proof}
For the first claim, we expand the $i$-th component of the SDE~\eqref{eq:sde} using the Ito formula:
\begin{align}
S_{t+\Delta t}^{i} =& s_t^i + \mu^i(s_t, A_t)\Delta t + \underset{j=1}{\overset{m}{\sum}}\sigma^{ij}(s_t, A_t)(B_{t+\Delta t}^j - B_t^j) + O(\Delta t^{\frac{3}{2}}).\label{eq:sdetaylor}
\end{align}
Since $B_{t+\Delta t}^j - B_t^j$ follows a zero-mean Gaussian and is independent of the state and the action,
\begin{equation}
\mathbb{E}_{p_{\pi}}\Bigg[\sum_{j=1}^{m} \sigma^{ij}(s_t, A_t)(B_{t+\Delta t}^j - B_t^j)\Bigg] 
= \sum_{j=1}^{m} \mathbb{E}_{\pi}\left[\sigma^{ij}(s_t, A_t) \right] \mathbb{E}\left[B_{t+\Delta t}^j - B_t^j\right] 
= 0. \notag 
\end{equation}
By taking the expectation of both sides of \eqref{eq:sdetaylor} and then letting $\Delta t\rightarrow 0$, the terms in $O(\Delta t^{\frac{3}{2}})$ vanish, and we obtain \eqref{eq:claim1}.
For the second part of the claim, we begin by considering the product:
\begin{equation}
\smalleq{\begin{aligned}
&(S_{t+\Delta t}^i - s_t^i)(S_{t+\Delta t}^j - s_t^j) \\
& = \mu^i(s_t, A_t)\mu^j(s_t, A_t)\Delta t^2 + \sum_{k=1}^{m} \sum_{l=1}^{m} \sigma^{ik}(s_t, A_t) \sigma^{jl}(s_t, A_t) (B_{t+\Delta t}^k - B_t^k)(B_{t+\Delta t}^l - B_t^l)\\
&\quad + \mu^i(s_t, A_t)\Delta t \sum_{k=1}^{m} \sigma^{jk}(s_t, A_t)(B_{t+\Delta t}^k - B_t^k) + \mu^j(s_t, A_t)\Delta t \sum_{k=1}^{m} \sigma^{ik}(s_t, A_t)(B_{t+\Delta t}^k - B_t^k)+ O(\Delta t^{\frac{3}{2}}).
\end{aligned}}
\label{eq:product}
\end{equation}
and then take the expectation of both sides. Using the fact
\begin{align}
\mathbb{E}\left[(B_{t+\Delta t}^k - B_t^k)(B_{t+\Delta t}^l - B_t^l)\right] = \delta_{kl}\Delta t, \notag
\end{align}
where $\delta_{kl}=1$ if $k=l$ and $\delta_{kl}=0$ otherwise, we can take the expectation of both sides of equation~\eqref{eq:product}
and then let $\Delta t\rightarrow 0$, which yields \eqref{eq:claim2}.
\end{proof}
\begin{remark}
Note that since $S_{t+\Delta t}$ in equations~(5) and~(6) is sampled under the policy $\pi$, 
our method is not applicable to off-policy settings such as value iteration or Q-learning.
This limitation reflects a fundamental distinction: our formulation relies on the HJB equation under a fixed policy (i.e., policy evaluation),
rather than the classical HJB equation involving maximization over all policies.
\end{remark}

From Proposition~\ref{thm:coef}, the HJB equation \eqref{eq:ehjb} can be reformulated as
\begin{equation*}
\smalleq{\begin{aligned}
V^{\pi}(s_t) &= \frac{1}{\gamma} \lim_{\Delta t \to 0}\mathbb{E}_{p_{\pi}} \Bigg[
    \rho(s_t, A_t) 
    + \sum_{i=1}^{n}  \frac{S_{t+\Delta t}^i - s_t^i}{\Delta t}
      \frac{\partial V^\pi(s)}{\partial s^i} \bigg|_{s_t}
    \\ &\qquad\qquad
    + \frac{1}{2} \sum_{i=1}^{n} \sum_{j=1}^{n} 
     \frac{(S_{t+\Delta t}^i - s_t^i)(S_{t+\Delta t}^j - s_t^j)}{\Delta t}
    \frac{\partial^2 V^\pi(s)}{\partial s^i \partial s^j} \bigg|_{s_t}
\Bigg].
\end{aligned}}
\end{equation*}

As we have rearranged the HJB equation so that the argument of expectation does not depend on the model, $\mu$ and $\sigma$, we can construct a temporal‑difference update directly from it. We refer to this update as \emph{differential temporal difference (dTD)} and expect it to be particularly effective when the observation interval $\Delta t$ is small.

\begin{figure}[t]
\centering
\begin{minipage}[t]{0.53\linewidth}
    \vspace{0pt}
    \centering
    \includegraphics[width=0.95\linewidth,clip,trim=0 10mm 0 0]{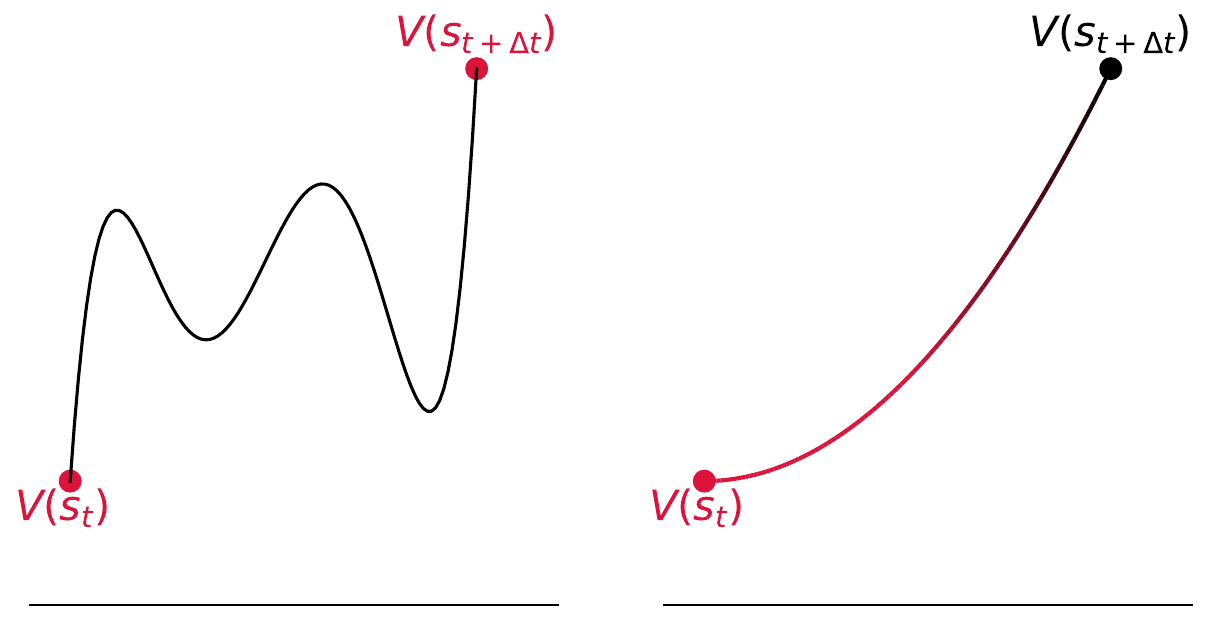} 
    \hspace*{1mm}
    {\small typical TD}
    \hspace{20mm}
    {\small proposed dTD}
\end{minipage}
\hfill
\begin{minipage}[t]{0.43\linewidth}
    \vspace{0pt}
    \caption{Qualitative difference between the typical TD method and the proposed dTD method; \textcolor{myredfig1}{the objects in red indicate what is adjusted} by each temporal difference. (\emph{Left}) In the typical TD method, \textcolor{myredfig1}{the values of $\hat{V}$} are adjusted to minimize the TD error. (\emph{Right}) In the dTD method, \textcolor{myredfig1}{the gradient and the second derivative of $\hat{V}$} at $s_t$ are adjusted to minimize the dTD error.}
    \label{fig:dTD}
\end{minipage}
\end{figure}

\begin{definition}[differential temporal difference]
Let $\Delta t > 0$ be a time step and $\widehat{V}$ denote an estimated value function.
The dTD is defined as:
\begin{equation}
\smalleq{\begin{aligned}
\mathrm{dTD} := \frac{1}{\gamma} \Bigg(
    & \rho(s_t, a_t)
    + \sum_{i=1}^{n} \frac{s_{t+\Delta t}^i - s_t^i}{\Delta t} 
        \frac{\partial \widehat{V}(s)}{\partial s^i} \bigg|_{s_t} \\
    & + \frac{1}{2} \sum_{i=1}^{n} \sum_{j=1}^{n} 
        \frac{(s_{t+\Delta t}^i - s_t^i)(s_{t+\Delta t}^j - s_t^j)}{\Delta t} 
        \frac{\partial^2 \widehat{V}(s)}{\partial s^i \partial s^j} \bigg|_{s_t}
\Bigg) - \widehat{V}(s_t).
\end{aligned}}
\label{eq:dtd}
\end{equation}
\end{definition}

As illustrated in Figure~\ref{fig:dTD}, unlike conventional TD methods based on transition kernels, dTD encourages the learning of a smooth value function by incorporating the continuity of the state space, even under sample-based approximation.
%
We note that the version of dTD for ODE systems can be recovered by simply removing the term corresponding to the diffusion coefficient $\sigma$.

\subsection{Convergence Analysis}
To analyze the convergence, we first define the key operators 
for the fixed-policy HJB equation.

\begin{definition}[HJB operator under a fixed policy] 
\label{def:HJB_operator}
Given a stationary Markov policy $\pi:S\!\to\!A$, discount rate $\gamma>0$,
drift $\mu:S\times A\!\to\!\mathbb R^{n}$, diffusion
$\sigma:S\times A\!\to\!\mathbb R^{n\times m}$, and instantaneous reward
rate $\rho:S\times A\!\to\!\mathbb R$, the \emph{HJB operator under a fixed policy} $T$ maps any function
$V:S\!\to\!\mathbb R$ to
\begin{equation*}
\smalleq{\begin{aligned}
(TV)(s_t) &:= \frac{1}{\gamma}\mathbb E_{\pi}\Bigg[\rho(s_t, A_t) + \underset{i=1}{\overset{n}{\sum}}\mu^i(s_t, A_t)\frac{\partial V(s)}{\partial s^i}\bigg|_{s_t}
\!
+ \frac{1}{2}\underset{i=1}{\overset{n}{\sum}}\underset{j=1}{\overset{n}{\sum}}[\sigma(s_t, A_t)\sigma^\top(s_t, A_t)]^{ij}\frac{\partial^2 V(s)}{\partial s^i \partial s^j}\bigg|_{s_t} \Bigg].
\end{aligned}}
\end{equation*}
\end{definition}

\begin{definition}[Infinitesimal Generator]
\label{def:generator}
Given the same policy $\pi$ and system dynamics $(\mu, \sigma)$ as in Definition \ref{def:HJB_operator}, the \emph{infinitesimal generator} $\mathcal{L}^\pi$ maps any $C^2$ function $V:S\!\to\!\mathbb R$ to
\begin{equation*}
\smalleq{\begin{aligned}
(\mathcal{L}^\pi V)(s) &:= \mathbb E_{\pi}\Bigg[ \underset{i=1}{\overset{n}{\sum}}\mu^i(s, A)\frac{\partial V(s)}{\partial s^i}
\!
+ \frac{1}{2}\underset{i=1}{\overset{n}{\sum}}\underset{j=1}{\overset{n}{\sum}}[\sigma(s, A)\sigma^\top(s, A)]^{ij}\frac{\partial^2 V(s)}{\partial s^i \partial s^j} \Bigg].
\end{aligned}}
\end{equation*}
\end{definition}

Unlike the Bellman operator, whose convergence is guaranteed by its contraction property, the HJB operator involves unbounded differential operators and is generally not a contraction. Therefore, we establish convergence of the iterative scheme, i.e., 

\begin{align*}
V_{k+1}=V_{k}+\eta_{k}(TV_{k}-V_{k})
\end{align*}

by analyzing its continuous-time limit,

\begin{align*}
\frac{V_{k+1}-V_{k}}{\eta_{k}}=TV_{k}-V_{k}\overset{\eta_k\rightarrow 0}{\Longrightarrow}\frac{\partial V(t)}{\partial t}=TV-V,
\end{align*}

under the idealized setting that the function $V$ itself can be directly manipulated and updated (rather than updated via a finite set of parameters). The convergence analysis thus relies on showing that these dynamics asymptotically stabilize to the unique fixed point $V$ satisfying $TV=V$. Since we focus on the HJB equation for a fixed policy, which lacks the $\max$ operator, this problem reduces to the analysis of a linear elliptic PDE. Using Definitions \ref{def:HJB_operator} and \ref{def:generator}, and defining the expected reward rate as $\bar{\rho}(s) := \mathbb E_{\pi}[\rho(s, A)]$, the equation $V = TV$ is equivalent to:

\begin{align*}
(\gamma I-\mathcal{L}^\pi)V(s)=\bar{\rho}(s).
\end{align*}

We show that the standard analysis for such PDEs, based on the Lax-Milgram theorem, can be applied to the bilinear form associated with the operator $(\gamma I - \mathcal{L}^\pi)$ to guarantee convergence to this unique fixed point. (We leave the analysis of errors from function approximation, which remains a challenging open problem even for standard Bellman operators, to future work.)

\begin{assumption}\label{as:hjb_assumptions}
We assume the following conditions hold:
\begin{enumerate}
\item \textbf{(Domain)} The state space $S$ is a bounded, connected open subset of $\mathbb R^n$ with a smooth boundary $\partial S$. We conduct our analysis in the Sobolev space $H^1(S)$.
\item \textbf{(Reflecting Boundary)} We assume the system satisfies a Neumann condition on $\partial S$:
\begin{equation*}
n(s)\cdot \left(D(s)\,\nabla V(s)\right)=0 \quad \text{for } s\in \partial S,
\end{equation*}
where $D(s) = \mathbb E_{\pi}[\sigma(s, A)\sigma^\top(s, A)]$, and $n(s)$ is the outward unit normal on $\partial S$.
\item \textbf{(Coefficients)} The policy-averaged reward $\bar{\rho}(s) := \mathbb E_{\pi}[\rho(s, A)]$, drift $\bar{\mu}(s) := \mathbb E_{\pi}[\mu(s, A)]$, and diffusion matrix $D(s)$ are bounded and Lipschitz continuous on $\bar{S}$.
\item \textbf{(Uniform Ellipticity)} The policy-averaged diffusion matrix $D(s)$ is symmetric and uniformly elliptic. That is, there exists a constant $\alpha > 0$ such that for all $s \in \bar{S}$ and $\xi \in \mathbb R^n$:
\[
\xi^\top D(s)\, \xi \ge \alpha \|\xi\|^2.
\]
\item \textbf{(Discount Factor)} The discount rate $\gamma>0$ is sufficiently large to ensure the coercivity of the bilinear form associated with the operator $(\gamma I-\mathcal{L}^\pi)$.
\end{enumerate}
\end{assumption}

Using these assumptions, we first formally establish the existence and uniqueness of the solution $V^\pi$ to the fixed-point equation $(\gamma I - \mathcal{L}^\pi)V = \bar{\rho}$. This solution $V^\pi$ serves as the unique fixed point for our dynamics. A proof is deferred to Appendix~\ref{app:convergence}.

\begin{lemma}[Existence and Uniqueness of the Fixed Point]
\label{lem:existence_uniqueness}
Under Assumption \ref{as:hjb_assumptions}, the linear elliptic PDE
\begin{align*}
(\gamma I - \mathcal{L}^\pi)V(s) = \bar{\rho}(s) \quad \text{for } s \in S
\end{align*}
admits a unique weak solution $V^\pi \in H^1(S)$.
\end{lemma}

\begin{proposition}[Exponential Stability of the Dynamics]
\label{prop:convergence}
Let $V^\pi \in H^1(S)$ be the unique fixed point from Lemma \ref{lem:existence_uniqueness}. Under Assumption \ref{as:hjb_assumptions}, the solution $V(t)$ of the continuous-time dynamics
$$\frac{\partial V(t)}{\partial t}=TV(t)-V(t)$$
converges exponentially fast to $V^\pi$ in the $L^2(S)$ norm. Specifically, there exists a constant $\lambda > 0$ such that for any initial condition $V(0) \in L^2(S)$:
$$\|V(t) - V^\pi\|_{L^2(S)} \le e^{-\lambda t}\|V(0) - V^\pi\|_{L^2(S)} .$$
\end{proposition}

We comment on the interpretation and practicality of the assumptions in Assumption~\ref{as:hjb_assumptions} as follows.
\textbf{(1) Domain.} The state space $S$ is bounded, which reflects physical or operational limits, such as joint limits or maximum velocity, making it a standard and natural modeling choice. \textbf{(2) Reflecting Boundary.} The Neumann condition ($n\cdot(D\nabla V)=0$) models a reflecting boundary. This is physically natural for state-constrained systems that cannot exit $S$. \textbf{(3) Coefficients.} Bounded/Lipschitz $\bar\mu$, $D$, and $\bar\rho$ are classical regularity conditions ensuring well-posed SDEs and boundedness of the weak form; they are routinely satisfied by smooth dynamics and regular policies.
\textbf{(4) Uniform ellipticity.} This implies the presence of non-degenerate random excitation (noise) in all directions. This is physically plausible for complex systems where environmental or internal noise can perturb the state in any dimension
\textbf{(5) Discount.} The discount $\gamma>0$ supplies a zero-order “anchor” that absorbs drift terms and yields a spectral gap; in practice, one can take $\gamma$ large enough (with the usual horizon–myopia trade-off) to guarantee coercivity.



\section{Method}

This section outlines our method for applying dTD in deep reinforcement learning. Because dTD relies on function approximation, we restrict our attention to the deep RL regime, representing the value function with a neural network. A concise pseudocode listing is provided in Appendix~\ref{app:algorithms}; here we explain the loss formulation and the $\beta$‑dTD stabilization strategy.

\subsection{Loss function}
In deep RL, TD methods typically use a fixed target, known as the TD target, $r(s, t) + \gamma_{\text{discrete}} V(s_{t+1})$, as the teacher and aim to approximate the prediction $V(s_t)$ by minimizing the squared error between them.
Although it may seem natural, by analogy with classical TD, to treat the $V(s_t)$ term in the dTD~\eqref{eq:dtd} as the prediction and regard the remaining terms as the dTD target, this is in fact unnecessary. As shown in Appendix~\ref{app:ito}, the terms that appear in \eqref{eq:ehjb} and thus in \eqref{eq:dtd} are derived through a series of transformations, and thus in dTD we no longer interpret the terms other than $V(s_t)$ as a low-variance estimate of the Bellman error. Consequently, the split between prediction and target is not unique.

We examine two different ways of defining the prediction and target from the rhs of \eqref{eq:dtd}. 
\begin{itemize}[topsep=0pt,leftmargin=1em]
\item As a baseline, we first consider a naive formulation following the typical TD-style decomposition, that is, we treat $V(s_t)$ as the prediction. We refer to this approach as \textbf{naive-dTD}.
\item On the other hand, inspired by the Taylor expansion of the Bellman equation, we treat the terms involving the derivative of $V(s_t)$ as the prediction and regard the rest as the target. We term such a more motivated parametrization simply as \textbf{dTD} hereafter.
\end{itemize}


These two variants are summarized in Table~\ref{table:target}. As introduced in the next section, we empirically found that dTD performed significantly better than naive-dTD.

\begin{table*}[tb] 
\centering
\caption{
Comparison of target and prediction terms in TD methods.
Here, $\Delta s_t^i := s_{t+\Delta t}^i - s_t^i$ denotes the $i$-th component of the state transition over a small time interval $\Delta t$.
}
\begin{tabular}{ccc}
\toprule
 & \text{Target} & \text{Prediction} \\
\midrule
\text{TD} & $r(s, t) + \gamma_{\text{discrete}} V(s_{t+1})$ & $V(s_t)$ \\
\cmidrule(lr){1-3}
\shortstack{\text{naive-dTD}\\{}\\{}} & \shortstack{$\rho(s_t, a_t) + \displaystyle\sum_{i=1}^{n}\frac{\Delta s_t^i}{\Delta t} \frac{\partial V(s)}{\partial s^i}\bigg|_{s_t}$ \\ \hspace{3em}$+ \frac{1}{2} \displaystyle \sum_{i=1}^{n}\sum_{j=1}^{n} \frac{\Delta s_t^i\Delta s_t^j}{\Delta t} \frac{\partial^2 V(s)}{\partial s^i \partial s^j}\bigg|_{s_t}$}
 & \shortstack{$\gamma V(s_t)$\\{}\\{}} \\
 \cmidrule(lr){1-3}
\shortstack{\text{dTD}\\{}\\{}} & \shortstack{$-\rho(s_t, a_t) + \gamma V(s_{t+\Delta t})$\\{}\\{}} & \shortstack{$\displaystyle \sum_{i=1}^{n}\frac{\Delta s_t^i}{\Delta t} \frac{\partial V(s)}{\partial s^i}\bigg|_{s_t}$ \\ $+ \frac{1}{2} \displaystyle \sum_{i=1}^{n} \sum_{j=1}^{n} \frac{\Delta s_t^i\Delta s_t^j}{\Delta t} \frac{\partial^2 V(s)}{\partial s^i \partial s^j}\bigg|_{s_t}$} \\
\bottomrule
\end{tabular}
\label{table:target}
\end{table*}

\subsection{Hybrid scheme for stabilizing dTD}

Although Proposition~\ref{prop:convergence} establishes a convergence analysis, it relies on an idealized setting and does not cover the practical instability that can arise from function approximation errors in deep RL. Consequently, we cannot a priori guarantee that plain dTD operates stably and efficiently in practice.

To make the critic update more robust, we linearly combine the classical TD error with the dTD error, using weights $1-\beta$ and $\beta$, respectively; we call the resulting update $\beta$‑dTD. The TD part supplies the empirical stability that underpins most deep RL algorithms, whereas the dTD part injects gradient information from the continuous dynamics, accelerating learning when the underlying assumptions are approximately satisfied. We hypothesize that $\beta$‑dTD can strike a balance to stabilize learning progress and potentially improve convergence behavior in practice.

\subsection{Efficient Computation of the dTD Loss}
Since equation~\eqref{eq:dtd} involves the Hessian, it may seem that $O(n^2)$ (where $n$ is the dimension of the observation space) computations are required. However, by rearranging the order of calculations, such as using
\begin{equation}
\Bigg\langle \Delta s_t,\ \frac{\partial^2 V(s)}{\partial s^2}\bigg|_{s_{t}}\Delta s_t\Bigg\rangle = \Bigg\langle \Delta s_t,\ \frac{\partial}{\partial s}\Bigg\langle\frac{\partial V}{\partial s}, \Delta s_t\Bigg\rangle\bigg|_{s_{t}}\Bigg\rangle, \notag
\end{equation}
we can avoid directly calculating the Hessian and achieve a computation complexity of $O(n)$.

\section{Experiments}
\subsection{Modification for discrete environment compatibility}
In our theoretical framework, we work with continuous rewards (i.e., reward rate function) and a specific form of the discount ratio $e^{-\gamma}$, which is not directly compatible with the discrete discount ratio $\gamma_{\text{discrete}}$. To address this, we adjusted the reward and discount ratio following the same approach discussed
in \citet{tbo:19}.
The continuous reward formulation can be approximated by:
\begin{align}
\int_{0}^{\infty} \text{e}^{-\gamma t} \rho(s_t, a_t)dt \approx \underset{k=0}{\overset{\infty}{\sum}}\text{e}^{(-\gamma\Delta t)k} \rho(s_{k\Delta t}, a_{k\Delta t})\Delta t.\notag
\end{align}
In this approximation, $\rho(s_{t}, a_t)\Delta t$ corresponds to the discrete reward $r$, and $e^{-\gamma \Delta t}$ corresponds to the discrete discount ratio $\gamma_{\text{discrete}}$. Thus, we can establish the following relationship:
\[
\rho(s_t, a_t) = \frac{r(s_t, a_t)}{\Delta t} \quad \text{and} \quad \gamma = -\frac{1}{\Delta t}\log({\gamma_{\text{discrete}}}).
\]
This adjustment ensures that the observed discrete rewards are properly scaled to align with the continuous reward formulation used in the dTD method. With this scaling, dTD can be computed as
\begin{equation*}
\smalleq{\begin{aligned}
&\text{dTD Target}: - r(s_t, a_t) - \log({\gamma_{\text{discrete}}}) V(s_{t+\Delta t}) \quad\text{and}\quad \\
&\text{dTD Prediction}:\ 
    \sum_{i=1}^{n}(s_{t+\Delta t}^i - s_t^i)\frac{\partial V(s)}{\partial s_i}\bigg|_{s_{t}} 
    + \frac{1}{2} \sum_{i=1}^{n} \sum_{j=1}^{n} 
    (s_{t+\Delta t}^i - s_t^i)(s_{t+\Delta t}^j - s_t^j)
    \frac{\partial^2 V(s)}{\partial s_i \partial s_j}\bigg|_{s_{t}}.
\end{aligned}}
\end{equation*}

\subsection{Experiment design}
\paragraph{Environment}
We conducted experiments with the Brax\footnote{\url{https://github.com/google/brax}} library \citep{ffr:21} in the following environments: Hopper, HalfCheetah, Ant and Humanoid. Each environment provides a mid- to high-dimensional state space, with the number of state components varying across environments: Hopper (11 dimensions), HalfCheetah (17 dimensions), Ant (27 dimensions) and Humanoid (244 dimensions). In each environment, at every step, we perturbed each state component by adding noise in the form of
\begin{equation*}
    s_i \leftarrow s_i + \text{coef}\times|s_i|\times\text{noise},
\end{equation*}
where $\text{noise}\sim\mathcal N(0, 1)$, and tested for three values of $\text{coef} = 0.00, 0.01, 0.05$. By adding this process noise, we aim to simulate with SDE systems, with the case of $\text{coef} = 0.00$ representing the limit case corresponding to ODEs. The specific time-step values used for each environment, which are not directly used in the learning process but are important to ensure they are small enough, are: Hopper: $\Delta t = 0.008$, HalfCheetah: $\Delta t = 0.05$, Ant: $\Delta t = 0.05$ and Humanoid: $\Delta t = 0.015$.

\paragraph{Baseline}
Various methods have been developed specifically for settings such as ODE \citep{tbo:19}, LQR \citep{vc:17}, or time-dependent Q functions with finite horizons \citep{jz:23}, but are often incompatible with the current deep RL framework \citep{ksy:21} or rely on model-based assumptions \citep{mb:97}, making them unsuitable for comparison with our proposed method.
We chose to use standard TD methods as baselines and experimented with TD, $\beta$-naive-dTD, and $\beta$-dTD using PPO \citep{swd:17}.

\paragraph{Hyperparameter tuning}
 For hyperparameter tuning, we applied the DEHB \citep{amh:21}, a multi-fidelity method that is currently considered the most effective method in RL \citep{elr:23}. While we performed hyperparameter tuning for the standard PPO algorithm as well, we also reference the official tuning results from \citet{ffr:21} for fair comparison. Additional details about the hyperparameter search space can be found in Appendix~\ref{app:implementation}.

\begin{figure}[H]
\centering

\pgfplotstableread[col sep=comma]{data/hopper_baseline_noise_lvl000_1747264299.csv}\hopbasezero
\pgfplotstableread[col sep=comma]{data/hopper_dtd_noise_lvl000_1747344528.csv}\hopdtdzero
\pgfplotstableread[col sep=comma]{data/hopper_shjb_noise_lvl000_1747139480.csv}\hopnaivezero

\pgfplotstableread[col sep=comma]{data/hopper_baseline_noise_lvl001_1747283609.csv}\hopbaseone
\pgfplotstableread[col sep=comma]{data/hopper_dtd_noise_lvl001_1747324235.csv}\hopdtdone
\pgfplotstableread[col sep=comma]{data/hopper_shjb_noise_lvl001_1747140229.csv}\hopnaiveone

\pgfplotstableread[col sep=comma]{data/hopper_baseline_noise_lvl005_1747269907.csv}\hopbasefive
\pgfplotstableread[col sep=comma]{data/hopper_dtd_noise_lvl005_1747346096.csv}\hopdtdfive
\pgfplotstableread[col sep=comma]{data/hopper_shjb_noise_lvl005_1747140321.csv}\hopnaivefive

\pgfplotstableread[col sep=comma]{data/halfcheetah_baseline_noise_lvl000_1747288373.csv}\hcbasezero
\pgfplotstableread[col sep=comma]{data/halfcheetah_dtd_noise_lvl000_1747246341.csv}\hcdtdzero
\pgfplotstableread[col sep=comma]{data/halfcheetah_shjb_noise_lvl000_1747139455.csv}\hcnaivezero

\pgfplotstableread[col sep=comma]{data/halfcheetah_baseline_noise_lvl001_1747277975.csv}\hcbaseone
\pgfplotstableread[col sep=comma]{data/halfcheetah_dtd_noise_lvl001_1747327875.csv}\hcdtdone
\pgfplotstableread[col sep=comma]{data/halfcheetah_shjb_noise_lvl001_1747140207.csv}\hcnaiveone

\pgfplotstableread[col sep=comma]{data/halfcheetah_baseline_noise_lvl005_1747284147.csv}\hcbasefive
\pgfplotstableread[col sep=comma]{data/halfcheetah_dtd_noise_lvl005_1747327047.csv}\hcdtdfive
\pgfplotstableread[col sep=comma]{data/halfcheetah_shjb_noise_lvl005_1747140300.csv}\hcnaivefive

\pgfplotstableread[col sep=comma]{data/ant_baseline_noise_lvl000_1747267684.csv}\antbasezero
\pgfplotstableread[col sep=comma]{data/ant_dtd_noise_lvl000_1747186365.csv}\antdtdzero
\pgfplotstableread[col sep=comma]{data/ant_shjb_noise_lvl000_1747143643.csv}\antnaivezero

\pgfplotstableread[col sep=comma]{data/ant_baseline_noise_lvl001_1747266857.csv}\antbaseone
\pgfplotstableread[col sep=comma]{data/ant_dtd_noise_lvl001_1747340331.csv}\antdtdone
\pgfplotstableread[col sep=comma]{data/ant_shjb_noise_lvl001_1747140185.csv}\antnaiveone

\pgfplotstableread[col sep=comma]{data/ant_baseline_noise_lvl005_1747277058.csv}\antbasefive
\pgfplotstableread[col sep=comma]{data/ant_dtd_noise_lvl005_1747353835.csv}\antdtdfive
\pgfplotstableread[col sep=comma]{data/ant_shjb_noise_lvl005_1747140279.csv}\antnaivefive

\pgfplotstableread[col sep=comma]{data/humanoid_baseline_noise_lvl000_1738223399.csv}\humbasezero
\pgfplotstableread[col sep=comma]{data/humanoid_dtd_noise_lvl000_1738228527.csv}\humdtdzero
\pgfplotstableread[col sep=comma]{data/humanoid_shjb_noise_lvl000_1747140253.csv}\humnaivezero

\pgfplotstableread[col sep=comma]{data/humanoid_baseline_noise_lvl001_1738223407.csv}\humbaseone
\pgfplotstableread[col sep=comma]{data/humanoid_dtd_noise_lvl001_1738235634.csv}\humdtdone
\pgfplotstableread[col sep=comma]{data/humanoid_shjb_noise_lvl001_1747140252.csv}\humnaiveone

\pgfplotstableread[col sep=comma]{data/humanoid_baseline_noise_lvl005_1738223416.csv}\humbasefive
\pgfplotstableread[col sep=comma]{data/humanoid_dtd_noise_lvl005_1738043094.csv}\humdtdfive
\pgfplotstableread[col sep=comma]{data/humanoid_shjb_noise_lvl005_1747140339.csv}\humnaivefive

\resizebox{\textwidth}{!}{\begin{tikzpicture}
\begin{groupplot}[
  group style={
    group size=3 by 4,
    horizontal sep=0.6cm,
    vertical sep=0.45cm,
  },
  width=4.2cm,
  height=3.2cm,
  title style={font=\tiny, yshift=-0.8em},
  tick label style={
    font=\tiny,
    inner sep=0pt,
    xshift=-0.1em,
    yshift=-0.1em
  },
  legend style={
    draw=none,
    fill=none,
    font=\tiny,
    at={(0.03,0.97)},  
    anchor=north west,  
    legend image code/.code={
      \draw[##1, line width=0.5pt] (0cm,0cm) -- (0.2cm,0cm);
    }
  },
  every x tick scale label/.style={
        xshift=7em,
        yshift=-0.7em
  },
  every y tick scale label/.style={
        xshift=-0.6em,
        yshift=4.8em
  },
]
\nextgroupplot[title={Hopper (noise: 0.00)},
  ylabel={\text{Episodic return}},
  ylabel style={
    font=\tiny,
    yshift=-0.2em  
  },
  scaled y ticks=base 10:-3,
  ]
    \addplot+[mark=none, myred] table[x=x, y=mean] {\hopdtdzero};
    \addplot[name path=upperb1, draw=none] table[x=x, y expr=\thisrow{mean}+\thisrow{std}] {\hopdtdzero};
    \addplot[name path=lowerb1, draw=none] table[x=x, y expr=\thisrow{mean}-\thisrow{std}] {\hopdtdzero};
    \addplot[fill=myred, fill opacity=0.2] fill between[of=upperb1 and lowerb1];

    \addplot+[mark=none, myblue, dash dot] table[x=x, y=mean] {\hopnaivezero};
    \addplot[name path=upperb1, draw=none] table[x=x, y expr=\thisrow{mean}+\thisrow{std}] {\hopnaivezero};
    \addplot[name path=lowerb1, draw=none] table[x=x, y expr=\thisrow{mean}-\thisrow{std}] {\hopnaivezero};
    \addplot[fill=myblue, fill opacity=0.2] fill between[of=upperb1 and lowerb1];
    
    \addplot+[mark=none, mygray, dashed] table[x=x, y=mean] {\hopbasezero};
    \addplot[name path=upperg1, draw=none] table[x=x, y expr=\thisrow{mean}+\thisrow{std}] {\hopbasezero};
    \addplot[name path=lowerg1, draw=none] table[x=x, y expr=\thisrow{mean}-\thisrow{std}] {\hopbasezero};
    \addplot[fill=mygray, fill opacity=0.2] fill between[of=upperg1 and lowerg1];

\nextgroupplot[title={Hopper (noise: 0.01)}, scaled y ticks=base 10:-3,]
    \addplot+[mark=none, myred] table[x=x, y=mean] {\hopdtdone};
    \addplot[name path=upperb1, draw=none] table[x=x, y expr=\thisrow{mean}+\thisrow{std}] {\hopdtdone};
    \addplot[name path=lowerb1, draw=none] table[x=x, y expr=\thisrow{mean}-\thisrow{std}] {\hopdtdone};
    \addplot[fill=myred, fill opacity=0.2] fill between[of=upperb1 and lowerb1];

    \addplot+[mark=none, myblue, dash dot] table[x=x, y=mean] {\hopnaiveone};
    \addplot[name path=upperb1, draw=none] table[x=x, y expr=\thisrow{mean}+\thisrow{std}] {\hopnaiveone};
    \addplot[name path=lowerb1, draw=none] table[x=x, y expr=\thisrow{mean}-\thisrow{std}] {\hopnaiveone};
    \addplot[fill=myblue, fill opacity=0.2] fill between[of=upperb1 and lowerb1];
    
    \addplot+[mark=none, mygray, dashed] table[x=x, y=mean] {\hopbaseone};
    \addplot[name path=upperg1, draw=none] table[x=x, y expr=\thisrow{mean}+\thisrow{std}] {\hopbaseone};
    \addplot[name path=lowerg1, draw=none] table[x=x, y expr=\thisrow{mean}-\thisrow{std}] {\hopbaseone};
    \addplot[fill=mygray, fill opacity=0.2] fill between[of=upperg1 and lowerg1];

\nextgroupplot[title={Hopper (noise: 0.05)}, scaled y ticks=base 10:-3,]
    \addplot+[mark=none, myred] table[x=x, y=mean] {\hopdtdfive};
    \addplot[name path=upperb1, draw=none] table[x=x, y expr=\thisrow{mean}+\thisrow{std}] {\hopdtdfive};
    \addplot[name path=lowerb1, draw=none] table[x=x, y expr=\thisrow{mean}-\thisrow{std}] {\hopdtdfive};
    \addplot[fill=myred, fill opacity=0.2] fill between[of=upperb1 and lowerb1];

    \addplot+[mark=none, myblue, dash dot] table[x=x, y=mean] {\hopnaivefive};
    \addplot[name path=upperb1, draw=none] table[x=x, y expr=\thisrow{mean}+\thisrow{std}] {\hopnaivefive};
    \addplot[name path=lowerb1, draw=none] table[x=x, y expr=\thisrow{mean}-\thisrow{std}] {\hopnaivefive};
    \addplot[fill=myblue, fill opacity=0.2] fill between[of=upperb1 and lowerb1];
    
    \addplot+[mark=none, mygray, dashed] table[x=x, y=mean] {\hopbasefive};
    \addplot[name path=upperg1, draw=none] table[x=x, y expr=\thisrow{mean}+\thisrow{std}] {\hopbasefive};
    \addplot[name path=lowerg1, draw=none] table[x=x, y expr=\thisrow{mean}-\thisrow{std}] {\hopbasefive};
    \addplot[fill=mygray, fill opacity=0.2] fill between[of=upperg1 and lowerg1];
    
\nextgroupplot[title={Halfcheetah (noise: 0.00)},
    ylabel={\text{Episodic return}},
    ylabel style={
      font=\tiny,
      yshift=-0.2em  
    },
    scaled y ticks=base 10:-4,  
    ytick={0, 10000, 20000},   
    yticklabels={0, 1, 2},
  ]
    \addplot+[mark=none, myred] table[x=x, y=mean] {\hcdtdzero};
    \addplot[name path=upperb1, draw=none] table[x=x, y expr=\thisrow{mean}+\thisrow{std}] {\hcdtdzero};
    \addplot[name path=lowerb1, draw=none] table[x=x, y expr=\thisrow{mean}-\thisrow{std}] {\hcdtdzero};
    \addplot[fill=myred, fill opacity=0.2] fill between[of=upperb1 and lowerb1];

    \addplot+[mark=none, myblue, dash dot] table[x=x, y=mean] {\hcnaivezero};
    \addplot[name path=upperb1, draw=none] table[x=x, y expr=\thisrow{mean}+\thisrow{std}] {\hcnaivezero};
    \addplot[name path=lowerb1, draw=none] table[x=x, y expr=\thisrow{mean}-\thisrow{std}] {\hcnaivezero};
    \addplot[fill=myblue, fill opacity=0.2] fill between[of=upperb1 and lowerb1];
    
    \addplot+[mark=none, mygray, dashed] table[x=x, y=mean] {\hcbasezero};
    \addplot[name path=upperg1, draw=none] table[x=x, y expr=\thisrow{mean}+\thisrow{std}] {\hcbasezero};
    \addplot[name path=lowerg1, draw=none] table[x=x, y expr=\thisrow{mean}-\thisrow{std}] {\hcbasezero};
    \addplot[fill=mygray, fill opacity=0.2] fill between[of=upperg1 and lowerg1];

\nextgroupplot[title={Halfcheetah (noise: 0.01)},
    scaled y ticks=base 10:-4,  
    ytick={0, 10000, 20000},   
    yticklabels={0, 1, 2},
  ]
    \addplot+[mark=none, myred] table[x=x, y=mean] {\hcdtdone};
    \addplot[name path=upperb1, draw=none] table[x=x, y expr=\thisrow{mean}+\thisrow{std}] {\hcdtdone};
    \addplot[name path=lowerb1, draw=none] table[x=x, y expr=\thisrow{mean}-\thisrow{std}] {\hcdtdone};
    \addplot[fill=myred, fill opacity=0.2] fill between[of=upperb1 and lowerb1];

    \addplot+[mark=none, myblue, dash dot] table[x=x, y=mean] {\hcnaiveone};
    \addplot[name path=upperb1, draw=none] table[x=x, y expr=\thisrow{mean}+\thisrow{std}] {\hcnaiveone};
    \addplot[name path=lowerb1, draw=none] table[x=x, y expr=\thisrow{mean}-\thisrow{std}] {\hcnaiveone};
    \addplot[fill=myblue, fill opacity=0.2] fill between[of=upperb1 and lowerb1];
    
    \addplot+[mark=none, mygray, dashed] table[x=x, y=mean] {\hcbaseone};
    \addplot[name path=upperg1, draw=none] table[x=x, y expr=\thisrow{mean}+\thisrow{std}] {\hcbaseone};
    \addplot[name path=lowerg1, draw=none] table[x=x, y expr=\thisrow{mean}-\thisrow{std}] {\hcbaseone};
    \addplot[fill=mygray, fill opacity=0.2] fill between[of=upperg1 and lowerg1];

\nextgroupplot[title={Halfcheetah (noise: 0.05)},
    scaled y ticks=base 10:-4,  
    ytick={0, 10000, 20000},   
    yticklabels={0, 1, 2},
  ]
    \addplot+[mark=none, myred] table[x=x, y=mean] {\hcdtdfive};
    \addplot[name path=upperb1, draw=none] table[x=x, y expr=\thisrow{mean}+\thisrow{std}] {\hcdtdfive};
    \addplot[name path=lowerb1, draw=none] table[x=x, y expr=\thisrow{mean}-\thisrow{std}] {\hcdtdfive};
    \addplot[fill=myred, fill opacity=0.2] fill between[of=upperb1 and lowerb1];

    \addplot+[mark=none, myblue, dash dot] table[x=x, y=mean] {\hcnaivefive};
    \addplot[name path=upperb1, draw=none] table[x=x, y expr=\thisrow{mean}+\thisrow{std}] {\hcnaivefive};
    \addplot[name path=lowerb1, draw=none] table[x=x, y expr=\thisrow{mean}-\thisrow{std}] {\hcnaivefive};
    \addplot[fill=myblue, fill opacity=0.2] fill between[of=upperb1 and lowerb1];
    
    \addplot+[mark=none, mygray, dashed] table[x=x, y=mean] {\hcbasefive};
    \addplot[name path=upperg1, draw=none] table[x=x, y expr=\thisrow{mean}+\thisrow{std}] {\hcbasefive};
    \addplot[name path=lowerg1, draw=none] table[x=x, y expr=\thisrow{mean}-\thisrow{std}] {\hcbasefive};
    \addplot[fill=mygray, fill opacity=0.2] fill between[of=upperg1 and lowerg1];
    
\nextgroupplot[title={Ant (noise: 0.00)},
    ylabel={\text{Episodic return}},
    ylabel style={
      font=\tiny,
      yshift=-0.2em  
    }, 
    scaled y ticks=base 10:-3,
    ]
    \addplot+[mark=none, myred] table[x=x, y=mean] {\antdtdzero};
    \addplot[name path=upperb1, draw=none] table[x=x, y expr=\thisrow{mean}+\thisrow{std}] {\antdtdzero};
    \addplot[name path=lowerb1, draw=none] table[x=x, y expr=\thisrow{mean}-\thisrow{std}] {\antdtdzero};
    \addplot[fill=myred, fill opacity=0.2] fill between[of=upperb1 and lowerb1];

    \addplot+[mark=none, myblue, dash dot] table[x=x, y=mean] {\antnaiveone};
    \addplot[name path=upperb1, draw=none] table[x=x, y expr=\thisrow{mean}+\thisrow{std}] {\antnaiveone};
    \addplot[name path=lowerb1, draw=none] table[x=x, y expr=\thisrow{mean}-\thisrow{std}] {\antnaiveone};
    \addplot[fill=myblue, fill opacity=0.2] fill between[of=upperb1 and lowerb1];
    
    \addplot+[mark=none, mygray, dashed] table[x=x, y=mean] {\antbaseone};
    \addplot[name path=upperg1, draw=none] table[x=x, y expr=\thisrow{mean}+\thisrow{std}] {\antbaseone};
    \addplot[name path=lowerg1, draw=none] table[x=x, y expr=\thisrow{mean}-\thisrow{std}] {\antbaseone};
    \addplot[fill=mygray, fill opacity=0.2] fill between[of=upperg1 and lowerg1];

\nextgroupplot[title={Ant (noise: 0.01)}, scaled y ticks=base 10:-3,]
    \addplot+[mark=none, myred] table[x=x, y=mean] {\antdtdone};
    \addplot[name path=upperb1, draw=none] table[x=x, y expr=\thisrow{mean}+\thisrow{std}] {\antdtdone};
    \addplot[name path=lowerb1, draw=none] table[x=x, y expr=\thisrow{mean}-\thisrow{std}] {\antdtdone};
    \addplot[fill=myred, fill opacity=0.2] fill between[of=upperb1 and lowerb1];

    \addplot+[mark=none, myblue, dash dot] table[x=x, y=mean] {\antnaiveone};
    \addplot[name path=upperb1, draw=none] table[x=x, y expr=\thisrow{mean}+\thisrow{std}] {\antnaiveone};
    \addplot[name path=lowerb1, draw=none] table[x=x, y expr=\thisrow{mean}-\thisrow{std}] {\antnaiveone};
    \addplot[fill=myblue, fill opacity=0.2] fill between[of=upperb1 and lowerb1];
    
    \addplot+[mark=none, mygray, dashed] table[x=x, y=mean] {\antbaseone};
    \addplot[name path=upperg1, draw=none] table[x=x, y expr=\thisrow{mean}+\thisrow{std}] {\antbaseone};
    \addplot[name path=lowerg1, draw=none] table[x=x, y expr=\thisrow{mean}-\thisrow{std}] {\antbaseone};
    \addplot[fill=mygray, fill opacity=0.2] fill between[of=upperg1 and lowerg1];

\nextgroupplot[title={Ant (noise: 0.05)}, scaled y ticks=base 10:-3,]
    \addplot+[mark=none, myred] table[x=x, y=mean] {\antdtdfive};
    \addplot[name path=upperb1, draw=none] table[x=x, y expr=\thisrow{mean}+\thisrow{std}] {\antdtdfive};
    \addplot[name path=lowerb1, draw=none] table[x=x, y expr=\thisrow{mean}-\thisrow{std}] {\antdtdfive};
    \addplot[fill=myred, fill opacity=0.2] fill between[of=upperb1 and lowerb1];

    \addplot+[mark=none, myblue, dash dot] table[x=x, y=mean] {\antnaivefive};
    \addplot[name path=upperb1, draw=none] table[x=x, y expr=\thisrow{mean}+\thisrow{std}] {\antnaivefive};
    \addplot[name path=lowerb1, draw=none] table[x=x, y expr=\thisrow{mean}-\thisrow{std}] {\antnaivefive};
    \addplot[fill=myblue, fill opacity=0.2] fill between[of=upperb1 and lowerb1];
    
    \addplot+[mark=none, mygray, dashed] table[x=x, y=mean] {\antbasefive};
    \addplot[name path=upperg1, draw=none] table[x=x, y expr=\thisrow{mean}+\thisrow{std}] {\antbasefive};
    \addplot[name path=lowerg1, draw=none] table[x=x, y expr=\thisrow{mean}-\thisrow{std}] {\antbasefive};
    \addplot[fill=mygray, fill opacity=0.2] fill between[of=upperg1 and lowerg1];

\nextgroupplot[title={Humanoid (noise: 0.00)},
  xlabel={\text{Total episode step}},
  xlabel style={
    font=\tiny,
    yshift=0.2em  
  },
  ylabel={\text{Episodic return}},
  ylabel style={
    font=\tiny,
    yshift=-0.2em  
  }, 
  scaled y ticks=base 10:-2,
  ]
    \addplot+[mark=none, myred] table[x=x, y=mean] {\humdtdzero};
    \addplot[name path=upperb1, draw=none] table[x=x, y expr=\thisrow{mean}+\thisrow{std}] {\humdtdzero};
    \addplot[name path=lowerb1, draw=none] table[x=x, y expr=\thisrow{mean}-\thisrow{std}] {\humdtdzero};
    \addplot[fill=myred, fill opacity=0.2] fill between[of=upperb1 and lowerb1];

    \addplot+[mark=none, myblue, dash dot] table[x=x, y=mean] {\humnaivezero};
    \addplot[name path=upperb1, draw=none] table[x=x, y expr=\thisrow{mean}+\thisrow{std}] {\humnaivezero};
    \addplot[name path=lowerb1, draw=none] table[x=x, y expr=\thisrow{mean}-\thisrow{std}] {\humnaivezero};
    \addplot[fill=myblue, fill opacity=0.2] fill between[of=upperb1 and lowerb1];
    
    \addplot+[mark=none, mygray, dashed] table[x=x, y=mean] {\humbasezero};
    \addplot[name path=upperg1, draw=none] table[x=x, y expr=\thisrow{mean}+\thisrow{std}] {\humbasezero};
    \addplot[name path=lowerg1, draw=none] table[x=x, y expr=\thisrow{mean}-\thisrow{std}] {\humbasezero};
    \addplot[fill=mygray, fill opacity=0.2] fill between[of=upperg1 and lowerg1];
    
\nextgroupplot[title={Humanoid (noise: 0.01)},
  xlabel={\text{Total episode step}},
  xlabel style={
    font=\tiny,
    yshift=0.2em  
  },
  scaled y ticks=base 10:-2,
  ]
    \addplot+[mark=none, myred] table[x=x, y=mean] {\humdtdone};
    \addplot[name path=upperb1, draw=none] table[x=x, y expr=\thisrow{mean}+\thisrow{std}] {\humdtdone};
    \addplot[name path=lowerb1, draw=none] table[x=x, y expr=\thisrow{mean}-\thisrow{std}] {\humdtdone};
    \addplot[fill=myred, fill opacity=0.2] fill between[of=upperb1 and lowerb1];

    \addplot+[mark=none, myblue, dash dot] table[x=x, y=mean] {\humnaiveone};
    \addplot[name path=upperb1, draw=none] table[x=x, y expr=\thisrow{mean}+\thisrow{std}] {\humnaiveone};
    \addplot[name path=lowerb1, draw=none] table[x=x, y expr=\thisrow{mean}-\thisrow{std}] {\humnaiveone};
    \addplot[fill=myblue, fill opacity=0.2] fill between[of=upperb1 and lowerb1];
    
    \addplot+[mark=none, mygray, dashed] table[x=x, y=mean] {\humbaseone};
    \addplot[name path=upperg1, draw=none] table[x=x, y expr=\thisrow{mean}+\thisrow{std}] {\humbaseone};
    \addplot[name path=lowerg1, draw=none] table[x=x, y expr=\thisrow{mean}-\thisrow{std}] {\humbaseone};
    \addplot[fill=mygray, fill opacity=0.2] fill between[of=upperg1 and lowerg1];
    
\nextgroupplot[title={Humanoid (noise: 0.05)},
  xlabel={\text{Total episode step}},
  xlabel style={
    font=\tiny,
    yshift=0.2em  
  },
  scaled y ticks=base 10:-2,
  ]
    \addplot+[mark=none, myred] table[x=x, y=mean] {\humdtdfive};
    \addplot[name path=upperb1, draw=none] table[x=x, y expr=\thisrow{mean}+\thisrow{std}] {\humdtdfive};
    \addplot[name path=lowerb1, draw=none] table[x=x, y expr=\thisrow{mean}-\thisrow{std}] {\humdtdfive};
    \addplot[fill=myred, fill opacity=0.2] fill between[of=upperb1 and lowerb1];

    \addplot+[mark=none, myblue, dash dot] table[x=x, y=mean] {\humnaivefive};
    \addplot[name path=upperb1, draw=none] table[x=x, y expr=\thisrow{mean}+\thisrow{std}] {\humnaivefive};
    \addplot[name path=lowerb1, draw=none] table[x=x, y expr=\thisrow{mean}-\thisrow{std}] {\humnaivefive};
    \addplot[fill=myblue, fill opacity=0.2] fill between[of=upperb1 and lowerb1];
    
    \addplot+[mark=none, mygray, dashed] table[x=x, y=mean] {\humbasefive};
    \addplot[name path=upperg1, draw=none] table[x=x, y expr=\thisrow{mean}+\thisrow{std}] {\humbasefive};
    \addplot[name path=lowerg1, draw=none] table[x=x, y expr=\thisrow{mean}-\thisrow{std}] {\humbasefive};
    \addplot[fill=mygray, fill opacity=0.2] fill between[of=upperg1 and lowerg1];

\end{groupplot}
\end{tikzpicture}}

\caption{Performance of \textcolor{mygray}{TD}, \textcolor{myblue}{$\beta$-naive-dTD}, and \textcolor{myred}{$\beta$-dTD} on continuous control benchmark. Each column corresponds to different noise levels ($\text{coef} = 0.00, 0.01, 0.05$), and each row corresponds to different environments. The tuned $\beta$ values for $\beta$-naive-dTD were {0.08, 0.07, 0.23, 0.02} and for $\beta$-dTD were {0.57, 0.74, 0.24, 0.33} in Hopper, HalfCheetah, Ant, and Humanoid, respectively.}
\label{fig:return_curve_td_dtd}
\end{figure}

\subsection{Results and discussion}

\paragraph{Comparative evaluation}
We compare (the variants of) the proposed method and the baseline:
\begin{description}[topsep=0pt,leftmargin=1em]
\item[($\beta$-naive-dTD vs. $\beta$-dTD)]
In Figure~\ref{fig:return_curve_td_dtd}, we can observe that $\beta$-dTD consistently outperforms $\beta$-naive-dTD. In all the environments, the optimized values of $\beta$ for $\beta$-naive-dTD were quite small, suggesting that the effective update rule of $\beta$-naive-dTD became close to that of the standard TD. Despite such a fact, however, the performance of $\beta$-naive-dTD remains significantly worse than the standard TD. There are two possible explanations: (1) the $\beta$ value chosen for $\beta$-naive-dTD was actually still not small enough to fully eliminate the adverse effect of the naive-dTD term; and (2) the TD-related parameters in $\beta$-naive-dTD were only suboptimally tuned because hyperparameter tuning resources were allocated mainly to optimizing $\beta$. These factors may jointly account for the unexpectedly poor performance of $\beta$-naive-dTD.
\item[(TD vs. $\beta$-dTD)]
As shown in Figure~\ref{fig:return_curve_td_dtd}, $\beta$-dTD outperforms TD or achieves comparable performance in all cases. While the degree of improvement varies, the final performance of TD and $\beta$-dTD tends to converge, which is not very surprising because both dTD and TD are derived from the same Bellman equation, and the resulting value functions should thus be similar to each other eventually. Nevertheless, dTD has the advantage of implicitly utilizing continuity information during training, which enables it to make more informative updates. Consequently, although the final performance may be comparable, $\beta$-dTD tends to show a faster rate of improvement relative to TD. 
\end{description}

\paragraph{Significance of dTD}
In contrast to $\beta$-naive-dTD, the weight $\beta$ in $\beta$-dTD is not exceedingly small. Notably, in the Halfcheetah environment, $\beta$ assumes a relatively large value of 0.74. This indicates that dTD retains a meaningful impact on the learning process.

\paragraph{Impact of process noise}
In terms of robustness to process noise, both $\beta$-dTD and TD exhibit similar performance. When the noise level is $\text{coef}=0.01$, neither method experiences significant degradation in performance. However, when the noise level is increased to $\text{coef}=0.05$, both $\beta$-dTD and TD show similar reduction in performance, particularly in environments like Ant and Halfcheetah.

\section{Conclusion}
We have presented differential TD (dTD), a temporal difference method based on the HJB equation. In contrast to approaches based on transition kernels, the proposed method can incorporate the continuity of dynamics into the learning process without knowing the dynamics. We have shown empirical results for a variety of continuous control environments with different time intervals.
The empirical results highlight the potential advantages of dTD in terms of learning speed and efficiency while also implying that stability concerns may exist in practice, which led to the introduction of the robust $\beta$-dTD update. Although the current paper focuses on the theoretical development of dTD, these observations are useful and also warrant further empirical exploration.

We have also analyzed the conditions under which the continuous-time dynamics of the HJB equation exhibit exponential stability toward the unique fixed point. This stability property, proven using techniques from linear elliptic PDE theory, is crucial for showing the theoretical convergence of the idealized iterative scheme. However, a drawback is that the sufficient conditions we identify, such as the requirement for a bounded domain and uniform ellipticity (Assumption~\ref{as:hjb_assumptions}), are often hard to maintain or verify in the context of deep RL with function approximation.

Future work includes bridging the gap between the theoretical exponential stability and the practical stability of dTD updates (e.g., by ensuring the coercivity condition in practice), reducing the variance of learning by improved estimators or regularization, and extending the wide range of existing TD-based techniques to the dTD framework.

\section*{Acknowledgements}
NT was supported by JST PRESTO JPMJPR24T6, JSPS JP20K19869, and JSPS JP25H01454.


\newpage
\appendix
\onecolumn
\section{Mathematical Details}
\subsection{Justification for the Continuous RL Formulation}
\label{app:justification}
In Section~3, we modeled the evolution of the state under a stochastic policy $\pi$ by the controlled SDE

\begin{equation}
dS_t = \mu(S_t, A_t)\, dt + \sigma(S_t, A_t)\, dB_t, \quad A_t \sim \pi(\cdot | S_t).\notag
\end{equation}

Here, the control is applied in the form of action samples drawn from a stochastic policy at each time step. While this formulation closely reflects the sampling-based behavior in RL, it raises a technical challenge: the presence of external randomness in addition to the intrinsic Brownian noise introduces analytical difficulties. As a result, the well-definedness of this SDE is not immediately obvious.

To address this issue, many prior works (e.g.,~\cite{wzz:20, jz:22, jz2:22, jz:23, zty:23}) adopt the averaged dynamics, denoted by $(\widetilde{S}_t)_{t \geq 0}$, whose distribution at each time $t$ is known to coincide with that of the original one under the same initial condition~\citep{wzz:20}. Specifically, the averaged dynamics is defined as

\begin{equation}
d\widetilde{S}_t = \widetilde{\mu}(\widetilde{S}_t, \pi)dt + \widetilde{\sigma}(\widetilde{S}_t, \pi)d\widetilde{B}_t,\notag
\end{equation}

where $\widetilde{\mu}(s, \pi)=\int_{\mathcal A}\mu(s, a)\pi(a)da,\ \widetilde{\sigma}(s, a) = \left({\int_{\mathcal A} \sigma(s, a)\sigma^\top(s, a)\pi(a)da}\right)^{\frac{1}{2}}$ and $(\widetilde{B}_t)_{t\geq 0}$ is the $m$-dimensional Brownian motion. Since the averaged dynamics no longer involves the external randomness induced by stochastic action selection, its well-definedness is ensured by classical SDE theory under standard assumptions such as Lipschitz continuity and a linear growth condition.

Since the marginal distributions of the two dynamics coincide, the corresponding value functions also coincide:
\begin{align}
V^\pi(s) &= \mathbb{E}_{p_\pi} \left[ \int_{t}^{\infty} \text{e}^{-\gamma(\tau - t)} \rho(S_\tau, A_\tau)\, d\tau \,\middle|\, S_t = s \right] \notag \\
&= \mathbb{E}_{\widetilde{p}} \left[ \int_{t}^{\infty} \text{e}^{-\gamma(\tau - t)} \widetilde{\rho}(\widetilde{S}_\tau, \pi)\, d\tau \,\middle|\, \widetilde{S}_t = s \right] \notag \\
&=: \widetilde{V}^\pi(s) \notag,
\end{align}
where $\widetilde{\rho}(s, \pi) := \int_{\mathcal A} \rho(s, a)\, \pi(a)\, da$. Hence the value function above is itself well defined and raises no analytical issues.

\subsection{Ito formula}
\label{app:ito}

\label{app:hjb2}
The Bellman equation is given by:
\begin{align}
V^{*}(s_t) = \underset{\pi}{\max}\ \mathbb E_{p_{\pi}}\left[\rho(s_t, A_t)\Delta t + \text{e}^{-\gamma \Delta t} V^{*}(S_{t+\Delta t})\right]. \notag
\end{align}

Assuming that a stochastic process $(S_t)_{t\geq 0}$ follows the SDE (1), the term $V^*(S_{t+\Delta t})$ can be further expanded using It\^{o}'s lemma:
\begin{align}
V^{*}(s_t) = \max_{\pi}\ 
\mathbb{E}_{p_{\pi}}\Big[\, &\rho(s_t, A_t)\Delta t 
+ \mathrm{e}^{-\gamma \Delta t} V^{*}(S_{t+\Delta t}) \,\Big] \notag \\
= \max_{\pi}\ 
\mathbb{E}_{p_{\pi}}\Bigg[\, &\rho(s_t, A_t)\Delta t 
+ \mathrm{e}^{-\gamma \Delta t} \Bigg\{V^{*}(s_t) 
+ \bigg(\sum_{i=1}^{n} \mu^i(s_t, A_t) 
\frac{\partial V^{*}(s)}{\partial s_i} \bigg|_{s_t} \notag \\
&+ \frac{1}{2} \sum_{i=1}^{n} \sum_{j=1}^{n} 
[\sigma(s_t, A_t)\sigma^\top(s_t, A_t)]^{ij} 
\frac{\partial^2 V^{*}(s)}{\partial s_i \partial s_j} \bigg|_{s_t}\bigg)\Delta t \notag \\
&+ \sum_{i=1}^{n} \sum_{j=1}^{m} \sigma_j^i(s_t, A_t) 
\frac{\partial V^{*}(s)}{\partial s_i} \bigg|_{s_t} 
(B_{t+\Delta t}^{j} - B_t^j) 
+ O((\Delta t)^{3/2}) \Bigg\} \Bigg] \notag \\
= \max_{\pi}\ 
\mathbb{E}_{p_{\pi}}\Bigg[ \, &\rho(s_t, A_t)\Delta t 
+ \mathrm{e}^{-\gamma \Delta t} \Bigg\{ V^{*}(s_t) 
+ \Bigg(\sum_{i=1}^{n} \mu^i(s_t, A_t) 
\frac{\partial V^{*}(s)}{\partial s_i}\bigg|_{s_t} \notag \\
&+ \frac{1}{2} \sum_{i=1}^{n} \sum_{j=1}^{n} 
[\sigma(s_t, A_t)\sigma^\top(s_t, A_t)]^{ij} 
\frac{\partial^2 V^{*}(s)}{\partial s_i \partial s_j}\bigg|_{s_t} 
\Bigg) \Delta t 
+ O((\Delta t)^{3/2}) \Bigg\} \Bigg]. \notag
\end{align}

Simplifying the equation and taking the limit as $\Delta t\rightarrow0$, we have the condition for the optimal value function:
\begin{align}
V^{*}(s_t) = \frac{1}{\gamma} \max_{\pi}\ 
\mathbb{E}_{p_{\pi}} \Bigg[
    \rho(s_t, A_t) 
    &+ \sum_{i=1}^{n} \mu^i(s_t, A_t) \frac{\partial V^{*}(s)}{\partial s_i} \bigg|_{s_t} \notag \\
    &+ \frac{1}{2} \sum_{i=1}^{n} \sum_{j=1}^{n} 
        [\sigma(s_t, A_t)\sigma^\top(s_t, A_t)]^{ij}
        \frac{\partial^2 V^{*}(s)}{\partial s_i \partial s_j} \bigg|_{s_t} 
\Bigg]. \notag
\end{align}

\subsection{Convergence of dTD}
\label{app:convergence}
\subsubsection{Proof of Lemma \ref{lem:existence_uniqueness}}
\label{app:proof_of_lemma1}
\begin{proof}
We consider the PDE $(\gamma I-\mathcal L^\pi)V=\bar\rho$ with
\begin{equation*}
(\mathcal L^\pi V)(s)=\sum_{i=1}^n \bar\mu^i(s)\,\frac{\partial V}{\partial s^i}
+\frac12\sum_{i,j=1}^n D^{ij}(s)\,\frac{\partial^2 V}{\partial s^i\partial s^j},
\quad
D(s)=\mathbb E_{a\sim\pi(\cdot|s)}[\sigma(s,a)\sigma^\top(s,a)].
\end{equation*}
Multiply by $v\in H^1(S)$, integrate over $S$, and integrate by parts once in $s^i$ for the second–order term. Using the homogeneous Neumann boundary condition $n\cdot(D\nabla V)=0$ on $\partial S$, we obtain
\begin{equation*}
\int_S(\gamma V-\mathcal L^\pi V)\,v
= \int_S \gamma Vv
+\tfrac12\int_S \sum_{i,j} D^{ij}\,\partial_j V\,\partial_i v
-\int_S \sum_i \bar\mu^i\,\partial_i V\,v
-\tfrac12\int_S \sum_{i,j} (\partial_i D^{ij})\,\partial_j V\,v .
\end{equation*}
Define
\begin{equation*}
B(V,v):=\int_S \gamma Vv
+\tfrac12\int_S \sum_{i,j} D^{ij}\,\partial_j V\,\partial_i v
-\int_S \sum_i \bar\mu^i\,\partial_i V\,v
-\tfrac12\int_S \sum_{i,j} (\partial_i D^{ij})\,\partial_j V\,v,
\quad
f(v):=\int_S \bar\rho\,v .
\end{equation*}

\emph{Boundedness.}
By the bounded/Lipschitz coefficients in Assumption~\ref{as:hjb_assumptions}.3, there exists $C>0$ such that
\begin{equation*}
|B(V,v)|\le C\,\|V\|_{H^1(S)}\,\|v\|_{H^1(S)},\qquad
|f(v)|\le \|\bar\rho\|_{L^2(S)}\,\|v\|_{L^2(S)}.
\end{equation*}

\emph{Coercivity.}
Using uniform ellipticity (Assumption~\ref{as:hjb_assumptions}.4), we have
\begin{equation*}
B(V,V)
= \int_S \gamma V^2
+\tfrac12\int_S (\nabla V)^\top D\,\nabla V
-\int_S \bar\mu\cdot(\nabla V)\,V
-\tfrac12\int_S (\mathrm{div}\,D)\cdot(\nabla V)\,V .
\end{equation*}
Hence
\begin{equation*}
B(V,V)\ge \gamma\|V\|_{L^2}^2+\frac{\alpha}{2}\|\nabla V\|_{L^2}^2
-\Big(\|\bar\mu\|_\infty+\tfrac12\|\mathrm{div}\,D\|_\infty\Big)\|\nabla V\|_{L^2}\|V\|_{L^2}.
\end{equation*}
By Young's inequality,
\begin{equation*}
\Big(\|\bar\mu\|_\infty+\tfrac12\|\mathrm{div}\,D\|_\infty\Big)\|\nabla V\|_{L^2}\|V\|_{L^2}
\le \tfrac{\alpha}{4}\|\nabla V\|_{L^2}^2
+\frac{(\|\bar\mu\|_\infty+\tfrac12\|\mathrm{div}\,D\|_\infty)^2}{\alpha}\|V\|_{L^2}^2.
\end{equation*}
Therefore
\begin{equation*}
B(V,V)\ge
\Big(\gamma-\frac{(\|\bar\mu\|_\infty+\tfrac12\|\mathrm{div}\,D\|_\infty)^2}{\alpha}\Big)\|V\|_{L^2}^2
+\frac{\alpha}{4}\|\nabla V\|_{L^2}^2
\ge c\,\|V\|_{H^1(S)}^2
\end{equation*}
for some $c>0$ ensured by Assumption~\ref{as:hjb_assumptions}.5.

Since $H^1(S)$ is a Hilbert space, $B$ is bounded and coercive, and $f$ is bounded, the Lax--Milgram Theorem implies there exists a unique weak solution $V^\pi\in H^1(S)$.
\end{proof}

\subsubsection{Proof of Proposition \ref{prop:convergence}}

\begin{proof}
Set
\begin{equation*}
J(t):=\dfrac{1}{2}\|V(t)-V^\pi\|_{L^2(S)}^2.
\end{equation*}
Since $TV=\frac{1}{\gamma}\big(\bar\rho+\mathcal L^\pi V\big)$ is affine and $TV^\pi=V^\pi$, we have
\begin{equation*}
\frac{\partial}{\partial t}\big(V(t)-V^\pi\big)=T V(t)-V(t)-\big(TV^\pi-V^\pi\big)
=-\gamma^{-1}\,(\gamma I-\mathcal L^\pi)\big(V(t)-V^\pi\big).
\end{equation*}
Differentiating $J$ and using the bilinear form $B(\cdot,\cdot)$ from yields
\begin{align*}
\frac{dJ}{dt}
&=\Big\langle V(t)-V^\pi,\ \frac{\partial}{\partial t}\big(V(t)-V^\pi\big)\Big\rangle \\
&=-\frac{1}{\gamma}\,\Big\langle V(t)-V^\pi,\ (\gamma I-\mathcal L^\pi)\big(V(t)-V^\pi\big)\Big\rangle \\
&=-\frac{1}{\gamma}\,B\!\left(V(t)-V^\pi,\,V(t)-V^\pi\right).
\end{align*}
By coercivity of $B$, there exists $c>0$ such that
\begin{equation*}
B\!\left(V(t)-V^\pi,\,V(t)-V^\pi\right)\ \ge\ c\,\|V(t)-V^\pi\|_{H^1(S)}^2\ \ge\ 2c\,J(t).
\end{equation*}
Therefore,
\begin{equation*}
\frac{dJ}{dt}\ \le\ -\frac{2c}{\gamma}\,J(t).
\end{equation*}
By Grönwall's inequality,
\begin{equation*}
J(t)\ \le\ J(0)\,e^{-(2c/\gamma)t},
\end{equation*}
which implies the claimed exponential convergence in $L^2(S)$:
\begin{equation*}
\|V(t)-V^\pi\|_{L^2(S)}\ \le e^{-(c/\gamma)t} \|V(0)-V^\pi\|_{L^2(S)}.
\end{equation*}
\end{proof}


\section{Implementation Details}\label{app:implementation}

\subsection{Algorithm}
\label{app:algorithms}

The procedures for policy evaluation are summarized in Algorithm~\ref{alg:pe_dtd}.

\begin{algorithm}[H]
   \caption{Policy evaluation with dTD}
   \label{alg:pe_dtd}
\begin{algorithmic}
   \REQUIRE policy $\pi$
   \ENSURE $V_\theta$
   \STATE Initialize value function $V_{\theta}$ with random parameter $\theta$
   \FOR{$\text{each training step}$}
    \STATE Initialize buffer \(\mathcal{D} = \emptyset\) and initial state $s_0$
      \FOR{$\text{each environment step}$} 
        \STATE $a_t\sim\pi(\cdot|s_t)$
        \STATE $s_{t+\Delta t}\sim p(\cdot |s_t, a_t)$
        \STATE $\mathcal D\leftarrow\mathcal D\cup (s_t, a_t, s_{t+\Delta t}, \rho_t)$
      \ENDFOR
      \FOR{$\text{each update step}$} 
        \STATE Sample a batch of $D$ random transitions from $\mathcal D$
        \STATE $\bar\theta\leftarrow\theta$
        \STATE $y_d \leftarrow -\rho_t^d - \gamma V_{\bar\theta}(s_{t+\Delta t}^d)$
        \STATE $\text{pred}_d \leftarrow \displaystyle\sum_{i=1}^{n}\frac{(s_{t+\Delta t}^{d,i} - s_t^{d,i})}{\Delta t} \frac{\partial V_{\theta}(s)}{\partial s_i}\bigg|_{s_{t}^{d}} + \displaystyle\frac{1}{2} \sum_{i=1}^{n} \sum_{j=1}^{n} \frac{(s_{t+\Delta t}^{d,i} - s_t^{d,i})(s_{t+\Delta t}^{d,j} - s_t^{d,j})}{\Delta t} \frac{\partial^2 V_{\theta}(s)}{\partial s_i \partial s_j}\bigg|_{s_{t}^{d}}$
        \STATE Update parameter $\theta$ using gradient descent method
        \STATE $\theta \leftarrow \text{argmin}_{\theta}\frac{1}{D}\sum_{d=1}^D(y_d-\text{pred}_d)^2$
      \ENDFOR
   \ENDFOR
\end{algorithmic}
\end{algorithm}

\subsection{Hyperparameters} 

The search space of the hyperparameters is summarized in Table~\ref{table:hpo}. The values chosen finally are summarized in Tables~\ref{table:hpo_result}.

\begin{table*}[ht]
\centering
\caption{Hyperparameter search space}
\begin{tabular}{ll}
\toprule
Hyperparameter               & Search Space                        \\
\midrule
\shortstack{environment steps per update \\ (number of parallel environment: 64)}& \shortstack{$\{8, 16, 32\}$ \\ {}}                    \\
number of epochs per update  & range(5, 20)                        \\
minibatch size               & $\{256, 512\}$                       \\
learning rate                & $\log(\text{interval}(1e-6, 5e-3))$ \\
normalize advantage          & $\{\text{True}, \text{False}\}$                  \\
gae lambda                   & interval(0.8, 0.9999)               \\
clip range                   & interval(0.0, 0.9)                  \\
entropy coefficient          & interval(0.0, 0.3)                  \\
value loss weight            & interval(0.0, 1.0)                  \\
mixture raio $\beta$                      & interval(0.0, 1.0)         \\
\bottomrule
\end{tabular}
\label{table:hpo}
\end{table*}

\begin{table*}[ht]
\centering
\caption{Best hyperparameters for PPO with TD and $\beta$-dTD across environments}
\resizebox{\textwidth}{!}{
\begin{tabular}{l|cccc|cccc}
\toprule
\multirow{2}{*}{Hyperparameter} & \multicolumn{4}{c|}{TD} & \multicolumn{4}{c}{$\beta$-dTD} \\
& Hopper & Halfcheetah & Ant & Humanoid & Hopper & Halfcheetah & Ant & Humanoid \\
\midrule
environment steps/update       & 32    & 16    & 8     & 16    & 32    & 8     & 32    & 16    \\
epochs/update                  & 7     & 5     & 11    & 15    & 19    & 9     & 16    & 10    \\
minibatch size                 & 512   & 256   & 512   & 512   & 256   & 256   & 256   & 256   \\
learning rate                  & 1.18e-3 & 5.93e-4 & 3.71e-4 & 1.40e-3 & 3.52e-4 & 3.29e-4 & 7.94e-5 & 1.08e-3 \\
normalize advantage            & False & False & False & False & False & False & False & False \\
GAE lambda                     & 0.886 & 0.833 & 0.935 & 0.999 & 0.998 & 0.908 & 0.805 & 0.888 \\
clip range                     & 0.439 & 0.268 & 0.425 & 0.063 & 0.075 & 0.040 & 0.520 & 0.713 \\
entropy coefficient            & 0.121 & 0.018 & 0.162 & 0.021 & 0.046 & 0.011 & 0.133 & 0.002 \\
value loss weight              & 0.049 & 0.513 & 0.711 & 0.091 & 0.675 & 0.268 & 0.274 & 0.054 \\
mixture ratio $\beta$         & ---   & ---   & ---   & ---   & 0.572 & 0.742 & 0.241 & 0.332 \\
\bottomrule
\end{tabular}
}
\label{table:hpo_result}
\end{table*}

\subsection{Computing Infrastructure and Reproducibility}
\label{app:infra}

\paragraph{Computing infrastructure}
Experiments were conducted on a machine with four NVIDIA Tesla V100 GPUs (32GB each) and an Intel Xeon E5-2698 v4 CPU. Although all experiments can be executed on a single GPU, multiple GPUs were used to run independent trials in parallel for efficiency.

\paragraph{Training time}
Hyperparameter tuning typically took 6--9 hours depending on the environment. Training time for the final runs depended on the environment and ranged from 10 to 60 minutes.

\paragraph{Reproducibility}
All experiments were conducted with the random seed fixed in the training scripts. However, MuJoCo (accessed via Brax) uses its own internal random seed that is not directly controllable, so full determinism cannot be ensured. The code is available at \url{https://github.com/4thhia/differential_TD} for reproducibility.

\end{document}